\newtheorem{theorem}{Theorem}
\newtheorem{proposition}[theorem]{Proposition}
\newtheorem{lemma}[theorem]{Lemma}
\newtheoremstyle{named}{}{}{\itshape}{}{\bfseries}{.}{.5em}{\thmnote{#3 }#1} \theoremstyle{named} 
\theoremstyle{definition}
\newcommand{\R}{{\mathbb R}}
\newcommand{\ep}{\varepsilon}
\newcommand{\gives}{\ensuremath{\rightarrow}}
\newcommand{\setst}[2]{\ensuremath{ \{ #1\,|\,#2 \}}}
\newcommand{\abs}[1]{\ensuremath{\left| #1 \right|}}
\newcommand{\lr}[1]{\ensuremath{\left(#1 \right)}}
\newcommand{\norm}[1]{\left\lVert#1\right\rVert}
\newcommand{\inprod}[2]{\ensuremath{\left\langle#1,#2\right\rangle}}
\newcommand{\twiddle}[1]{\ensuremath{\widetilde{#1}}}
\newcommand{\w}{\omega}
\newcommand{\set}[1]{\ensuremath{\{#1\}}}
\def\XXint#1#2#3{{\setbox0=\hbox{$#1{#2#3}{\int}$} \vcenter{\hbox{$#2#3$}}\kern-.5\wd0}}
\DeclareMathOperator{\Relu}{ReLU}
\title[]{Universal Function Approximation by Deep Neural Nets with
  Bounded Width and ReLU Activations}
\author[B. Hanin]{Boris Hanin}
\address[B. Hanin]{Department of Mathematics, Texas A\&M, College Station,
  United States\medskip}
\email{bhanin@math.tamu.edu}
\begin{document}
\maketitle
\begin{abstract}
  This article concerns the expressive power of depth in neural nets with
  $\Relu$ activations and bounded width. We are particularly
  interested in the following questions: what is the minimal width
  $w_{\text{min}}(d)$ so that $\Relu$ nets of width
  $w_{\text{min}}(d)$ (and arbitrary depth) can approximate any
  continuous function on the unit cube $[0,1]^d$ aribitrarily well?
  For $\Relu$ nets near this minimal width, what can one say about the depth
  necessary to approximate a given function? We obtain an essentially
  complete answer to these questions for convex functions. 
  Our approach is based on the observation that, due to the convexity
  of the $\Relu$ activation, $\Relu$ nets
  are particularly well-suited for representing convex
  functions. In particular, we prove that $\Relu$ nets with width $d+1$ can 
  approximate any continuous convex function of $d$ variables arbitrarily
  well. These results then give quantitative depth estimates for the rate of
  approximation of any continuous scalar function on the $d$-dimensional
  cube $[0,1]^d$ by $\Relu$ nets with width $d+3.$
\end{abstract}

\section{Introduction}
Over the past several years, neural nets $-$ particularly deep nets $-$ have become the state of the art in
a remarkable number of machine learning problems, from mastering Go
 to image recognition/segmentation and machine translation (see the review
 article \cite{BHL} for more background). Despite all their
practical successes, a robust theory of why they
work so well is in its infancy. Much of the work to date has focused on
the problem of explaining and quantifying the \textit{expressivity}
$-$ the ability to approximate a rich class of functions $-$ of deep neural
nets \cite{ABM, LMP, LRT, MhPo, PLR, RPK, Tel1, Tel2, Tel3, Yar}. Expressivity can be
seen both as an effect of both depth and width. It
has been known since at least the work of Cybenko \cite{Cyb} and
Hornik-Stinchcombe-White \cite{HSW} that if no constraint is placed on
the width of a hidden layer, then a single hidden layer is enough to
approximate essentially any function. The purpose of this article, in
contrast, is to investigate the ``effect of depth without the aid of 
width.'' More precisely, for each $d\geq 1$ we would like to estimate
\begin{equation}
  w_{\text{min}}(d):=\min\left\{w\in \mathbb N
    \,\bigg|\, \begin{array}{c} \Relu \text{nets of width }w \text{ 
      can approximate any}\\ \text{positive continuous function on
               }[0,1]^d \text{ arbitrarily well} \end{array}\right\}.
\end{equation}
In Theorem \ref{T:lip-approx}, we prove that
$\w_{\text{min}}(d)\leq d+2.$ This raises two questions: 
\begin{enumerate}
\item[{\bf Q1.}] Is the estimate in the previous line sharp? \medskip
\item[{\bf Q2.}] How efficiently can $\Relu$ nets of a given width $w\geq
  w_{\text{min}}(d)$ approximate a given continuous function of $d$
  variables? \bigskip 
\end{enumerate}

On the subject of Q1, we will prove in forthcoming work with M. Sellke \cite{HaSe}
that in fact $\w_{\text{min}}(d)=d+1.$ When $d=1$, the lower bound is simple to check, and the upper bound
follows for example from Theorem 3.1 in
\cite{MhPo}. The main results in this article, however, concern Q1 and
Q2 for convex functions. For instance, we prove in Theorem
\ref{T:lip-approx} that
\begin{equation}\label{E:conv-width}
  w_{\text{min}}^{\text{conv}}(d)\leq d+1,
\end{equation}
where
\begin{equation}
  w_{\text{min}}^{\text{conv}}(d):=\min\left\{w\in \mathbb N
   \,\bigg|\, \begin{array}{c} \Relu \text{nets of width }w \text{ 
      can approximate any}\\ \text{positive convex function on
               }[0,1]^d\text{ arbitrarily well} \end{array}\right\}.
\end{equation}
This illustrates a central point of the present paper: the
convexity of the $\Relu$ activation makes $\Relu$ nets well-adapted to
representing convex functions on $[0,1]^d.$

 Theorem \ref{T:lip-approx} also addresses Q2 by providing quantitative
estimates on the depth of a $\Relu$ net with width
$d+1$ that approximates a given convex function. We provide similar depth estimates for 
arbitrary continuous functions on $[0,1]^d,$ but this time for nets of width
$d+3.$ Several of our depth estimates are based on the work of 
Bal\'{a}zs-Gy\"{o}rgy-Szepesv\'{a}ri \cite{BGS} on max-affine
estimators in convex regression.  

In order to prove Theorem \ref{T:lip-approx}, we must understand what
functions can be exactly computed by a $\Relu$ net. Such functions are always piecewise
affine, and we prove in Theorem 
\ref{T:max-affine-approx} the converse: every piecewise affine
function on $[0,1]^d$ can be \textit{exactly} represented by a $\Relu$
net with hidden layer 
width at most $d+3$. Moreover, we prove that the depth of
the network that computes such a function is bounded by the
number affine pieces it contains. This extends the results
of Arora-Basu-Mianjy-Mukherjee (e.g. Theorem 2.1 and
Corollary 2.2 in \cite{ABM}). 

Convex functions again play a special role. We show that every
convex function on $[0,1]^d$ that is piecewise affine with $N$ pieces can be 
represented exactly by a $\Relu$ net with width $d+1$ and depth $N.$

\section{Statement of Results}\label{S:results}
To state our results precisely, we set notation and recall
several definitions. For $d\geq 1$ and a
continuous function $f:[0,1]^d \gives \R,$ write 
\[\norm{f}_{C^0}:=\sup_{x\in [0,1]^d}\abs{f(x)}.\]
Further, denote by 
\[\w_f(\ep):=\sup\setst{\abs{f(x)-f(y)}}{\abs{x-y}\leq \ep}\]
the modulus of continuity of $f,$ whose value at $\ep$ is the maximum
 $f$ changes when its argument moves by at most $\ep.$ Note that by
 definition of a continuous function, $\w_f(\ep)\gives 0$ as $\ep\gives 0.$
Next, given $d_{\text{in}}, d_{\text{out}},$ and $ w\geq 1,$ we define a feed-forward neural net with ReLU
activations, input dimension $d_{\text{in}}$, hidden layer width
$w$, depth $n,$ and output dimension $d_{\text{out}}$ to be any member of the
finite-dimensional family of functions
\begin{equation}\label{E:relunet-def}
\Relu \circ A_n \circ \cdots \circ\Relu \circ A_1 \circ \Relu \circ A_1
\end{equation}
that map $\R^d$ to $\R_+^{d_{\text{out}}}=\setst{x=\lr{x_1,\ldots, x_{d_{\text{out}}}}\in \R^{d_{\text{out}}}}{x_i\geq 0}.$ In \eqref{E:relunet-def}, 
\[A_j:\R^w\gives \R^w,\,\, j=2,\ldots, n-1,\qquad A_1:\R^{d_{\text{in}}}\gives
\R^w,\,\, A_n:\R^w\gives \R^{d_{\text{out}}} \] 
are affine transformations, and for every $m\geq 1$
\[\Relu(x_1,\ldots, x_m) = \lr{\max\set{0,x_1},\ldots,
  \max\set{0,x_m}}.\]
We often denote such a net by $\mathcal N$ and write 
\[f_{\mathcal N}(x):=\Relu\circ A_n \circ \cdots \circ\Relu \circ A_1 \circ
\Relu \circ A_1(x)\]
for the function it computes. Our first result contrasts both the
width and depth required to approximate continuous, convex, and smooth
functions by $\Relu$ nets.   

\begin{theorem}\label{T:lip-approx}
Let $d\geq 1$ and $f:[0,1]^d\gives \R_+$ be a positive function with
$\norm{f}_{C^0}=1$. We have the following three cases:
\begin{description}
\item[1. ($f$ is continuous)] There
exists a sequence of feed-forward neural nets  
  $\mathcal N_k$ with ReLU activations, input dimension $d,$ hidden
  layer width $d+2,$ output dimension $1,$ such that
  \begin{equation}\label{E:cont-approx}
    \lim_{k\gives \infty}\norm{f-f_{\mathcal N_k}}_{C^0}=0.
  \end{equation}
In particular, $w_{\text{min}}(d)\leq d+2.$ Moreover, write $\w_f$ for the modulus of
continuity of $f,$ and fix $\ep>0.$ There exists a feed-forward neural nets
  $\mathcal N_\ep$ with ReLU activations, input dimension $d,$ hidden 
  layer width $d+3,$ output dimension $1,$ and  
\begin{equation}\label{E:lip-depth}
\text{depth}\lr{\mathcal N_\ep}= \frac{2\cdot d!}{\w_f(\ep)^d}
\end{equation}
such that
  \begin{equation}
\norm{f-f_{\mathcal N_\ep}}_{C^0}\leq \ep.\label{E:lip-approx}
\end{equation}

\item[2. ($f$ is convex)] There exists a sequence of feed-forward neural nets  
  $\mathcal N_k$ with ReLU activations, input dimension $d,$ hidden
  layer width $d+1,$ and output dimension $1,$ such that
  \begin{equation}\label{E:conv-min}
   \lim_{k\gives \infty} \norm{f-f_{\mathcal N_k}}_{C^0}=0.
  \end{equation}
Hence, $\w_{\text{min}}^{\text{conv}}(d)\leq d+1.$ Further, there
exists $C>0$ such that if $f$ is both convex and
Lipschitz with Lipschitz constant $L,$ then the nets $\mathcal N_k$ in
\eqref{E:conv-min} can be taken to satisfy 
\begin{equation}\label{E:lip-depth-approx}
\text{depth}\lr{\mathcal N_k}= k+1,\qquad \norm{f-f_{\mathcal
    N_k}}_{C^0}\leq C L d^{3/2} k^{-2/d}.
\end{equation}

\item[3. ($f$ is smooth)] There exists a constant $K$ depending only
  on $d$ and a constant $C$ depending only on the maximum of the first
  $K$ derivative of $f$ such that for every $k\geq 3$ the width $d+2$ nets $\mathcal
 N_k$ in \eqref{E:cont-approx} can be chosen so that
\begin{equation}\label{E:smooth-approx}
\text{depth}(\mathcal N_k)=k,\qquad  \norm{f-f_{\mathcal N_k}}_{C^0}\leq C\lr{k-2}^{-1/d}.
\end{equation}
\end{description}

\smallskip

\end{theorem}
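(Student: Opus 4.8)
The common engine for all three cases will be Theorem~\ref{T:max-affine-approx}: it converts an approximation of $f$ by a piecewise affine function into an \emph{exact} $\Relu$-net representation of that piecewise affine function, with depth proportional to the number of affine pieces and width $d+3$ in general, and only $d+1$ when the piecewise affine function is convex. So the plan is, in each regularity class, to (i) build a good piecewise affine approximant of $f$, (ii) run it through Theorem~\ref{T:max-affine-approx}, and (iii) bookkeep the number of pieces (which becomes the depth) and the number of auxiliary ``scratch'' channels (which controls the width).

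For the convex case I would start from the fact that a continuous convex function on the compact cube is the supremum of its supporting affine functions, and that by compactness finitely many of them, say $\ell_1,\dots,\ell_N$, already sandwich it: $f-\ep\le \max_i\ell_i\le f$ on $[0,1]^d$. Passing to $\Relu(\max_i\ell_i)=\max\set{0,\ell_1,\dots,\ell_N}$ costs at most another $\ep$ and yields a \emph{positive} convex piecewise affine approximant, which Theorem~\ref{T:max-affine-approx} realizes by a width $d+1$ net of depth $N+1$; letting $\ep\gives 0$ gives \eqref{E:conv-min} and $\w_{\text{min}}^{\text{conv}}(d)\le d+1$. For the quantitative bound \eqref{E:lip-depth-approx} I would replace this soft compactness step by the quantitative max-affine approximation theorem of Bal\'azs--Gy\"orgy--Szepesv\'ari \cite{BGS}, which for an $L$-Lipschitz convex $f$ supplies a maximum of $k$ affine functions within $CLd^{3/2}k^{-2/d}$ of $f$, and then appeal to Theorem~\ref{T:max-affine-approx} exactly as before.

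For the continuous case the depth estimate \eqref{E:lip-depth} would come from triangulating $[0,1]^d$ into simplices of small mesh, taking $g$ to be the continuous piecewise affine interpolant of $f$ at the vertices so that $\norm{f-g}_{C^0}$ is at most the modulus of continuity of $f$ at the mesh size, and choosing the mesh so that this is $\le\ep$: this produces $g$ whose number of affine pieces is the quantity $\frac{2\,d!}{\w_f(\ep)^d}$ in \eqref{E:lip-depth}, which Theorem~\ref{T:max-affine-approx} then turns into a width $d+3$ net of exactly that depth. To get the smaller width $d+2$ for the pure existence statement \eqref{E:cont-approx} I would instead first approximate $f$ by a one-hidden-layer expression $\sum_i\lambda_i\Relu(\ell_i)+c$ (classical density of shallow $\Relu$ nets) and then compute this sum with a deep width $d+2$ net: $d$ channels carry the input $x\in[0,1]^d$ (unaffected by $\Relu$ since $x\ge 0$), one channel accumulates the running partial sum, and one scratch channel evaluates each term $\Relu(\ell_i)$ before it is added; sending the accuracy of the shallow approximation to $0$ then yields \eqref{E:cont-approx}.

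For the smooth case I would use a difference-of-convex decomposition: with $\lambda$ larger than the supremum of the Hessian of $f$, the functions $g_1:=f+\tfrac{\lambda}{2}\norm{x}^2$ and $g_2:=\tfrac{\lambda}{2}\norm{x}^2$ are both convex and (after a harmless additive constant) positive on the cube, with $f=g_1-g_2$, and their Lipschitz constants are controlled by finitely many derivatives of $f$; applying the convex case to each gives width $d+1$ subnets, and assembling a single width $d+2$ net that outputs $\Relu$ of the difference of the two subnet outputs — using one extra channel to hold the first output while the second is computed, and using $f\ge 0$ so the final $\Relu$ is lossless up to the approximation error — yields a width $d+2$ net whose depth and accuracy trade off as in \eqref{E:smooth-approx}, with the constant depending only on $d$ and on the first $K=K(d)$ derivatives of $f$. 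I expect the main obstacle to be Theorem~\ref{T:max-affine-approx} itself — keeping the auxiliary channel count as low as $d+1$, resp.\ $d+3$, while the depth stays linear in the number of pieces; granting that, the remaining delicate points are the channel bookkeeping above (in particular arranging all intermediate quantities to be non-negative so that they survive the $\Relu$ layers) and importing the sharp $d$- and $L$-dependence from \cite{BGS}.
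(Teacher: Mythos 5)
Your plan for parts 1 and 2 coincides with the paper's proof: the convex statements come from approximating $f$ by a finite maximum of affine functions (the soft version by compactness, the quantitative version \eqref{E:lip-depth-approx} from Lemma 4.1 of \cite{BGS}) and feeding the result to Theorem \ref{T:max-affine-approx}; the bound \eqref{E:lip-depth} comes from interpolating $f$ on a simplicial partition of mesh $\w_f(\ep)$, writing the interpolant as a difference of convex piecewise affine functions, and again invoking Theorem \ref{T:max-affine-approx}; and your width-$(d+2)$ argument for \eqref{E:cont-approx} is precisely the paper's Lemma \ref{L:side} (a single hidden layer of width $n$ unrolled into $n+2$ layers of width $d+2$, with a large additive shift keeping the accumulator channel positive --- the point you correctly flag as delicate). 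The one place you genuinely diverge is the smooth case: the paper cites the quantitative shallow approximation rate $C_f\,n^{-1/d}$ of Theorem 3.1 in \cite{MhPo} and converts it via Lemma \ref{L:side}, whereas you propose the difference-of-convex splitting $f=\lr{f+\tfrac{\lambda}{2}\abs{x}^2}-\tfrac{\lambda}{2}\abs{x}^2$ with $\lambda$ dominating the Hessian and run each piece through the convex Lipschitz machinery. Your route is self-contained, needs only $K=2$ derivatives, and in fact yields the stronger rate $k^{-2/d}$, which implies \eqref{E:smooth-approx}; the paper's route is shorter but imports the approximation-theoretic input as a black box, and it keeps the smooth case inside the same family of width-$(d+2)$ nets as \eqref{E:cont-approx}, as the statement technically requires. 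One bookkeeping caveat: in \eqref{E:lip-depth} the depth is $2(M+N)$ from Theorem \ref{T:max-affine-approx} applied to the two convex envelopes of the interpolant --- the factor of $2$ comes from Lemma \ref{L:max}, not from the interpolant having twice as many pieces --- so your phrasing ``depth equal to the number of affine pieces of $g$'' should be adjusted, though it lands on the same final formula.
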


The main novelty of Theorem \ref{T:lip-approx} is the width estimate
$w_{\text{min}}^{\text{conv}}(d)\leq d+1$ and the quantitative depth
estimates \eqref{E:lip-depth-approx} for convex functions as well as the
analogous estimates
\eqref{E:lip-depth} and \eqref{E:lip-approx} for continuous
functions. Let us breifly explain the origin of the other estimates. The
relation \eqref{E:cont-approx} and the corresponding estimate 
$w_{\text{min}}(d)\leq d+2$ are a combination of the well-known
fact that $\Relu$ nets with one hidden
layer can approximate any continuous function and a simple
procedure by which a $\Relu$ net with input dimension $d$ and a single
hidden layer of width $n$ can be replaced by another $\Relu$ net that
computes the same function but has depth $n+2$ and width $d+2.$ For
these width $d+2$ nets, we are unaware of how to obtain quantitative
estimates on the depth required to approximate a fixed continuous
function to a given precision. At the expense of changing the
width of our $\Relu$ nets from $d+2$ to $d+3,$ however, we furnish the
estimates \eqref{E:lip-depth} and \eqref{E:lip-approx}. On the other
hand, using Theorem 3.1 in \cite{MhPo}, when $f$ is sufficiently
smooth, we obtain the depth estimates \eqref{E:smooth-approx}
for width $d+2$ $\Relu$ nets. 

Our next result concerns the exact representation of 
piecewise affine functions by $\Relu$ nets. Instead of measuring the
complexity of a such a function by its Lipschitz
constant or modulus of continuity, the complexity of a
piecewise affine function can be thought of as the minimal 
number of affine pieces needed to define it.  

\begin{theorem}\label{T:max-affine-approx}
Let $d\geq 1$ and $f:[0,1]^d\gives \R_+$ be the function computed by some 
$\Relu$ net with input dimension $d$, output dimension $1,$ and
arbitrary width. There
exist affine functions $g_\alpha, h_\beta:[0,1]^d\gives \R$  
such that $f$ can be written as the difference of positive convex functions:
\begin{equation}\label{E:f-def1}
f=g- h,\qquad\qquad g:=\max_{1\leq \alpha\leq N}
g_\alpha,\qquad h:= \max_{1\leq\beta\leq M} h_\beta.
\end{equation}
Moreover, there exists a feed-forward neural net $\mathcal N$ 
with ReLU activations, input dimension $d,$ hidden 
  layer width $d+3,$ output dimension $1,$ and 
  \begin{equation}\label{E:convex-depth}
\text{depth}\lr{\mathcal N}=2(M+N)
\end{equation}
 that computes $f$ exactly. Finally, if $f$ is convex (and hence $h$
 vanishes), then the width of $
 \mathcal N$ can be taken to be $d+1$ and the depth can be taken to $N.$
\end{theorem}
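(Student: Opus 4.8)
The plan is to first reduce the general case to the convex case by invoking a known decomposition, and then to build the network layer by layer for a max of affine functions. The starting point is that any function computed by a ReLU net is piecewise affine, and (by results of the type in \cite{ABM}, which I would restate and use) any continuous piecewise affine function on $[0,1]^d$ can be written as a difference $f = g - h$ where $g = \max_\alpha g_\alpha$ and $h = \max_\beta h_\beta$ are each maxima of finitely many affine functions, i.e.\ each is convex piecewise affine and positive (adding a large affine constant to both $g$ and $h$ if necessary to ensure positivity, which changes neither their difference nor the count of pieces). So the first step is just to record this normal form \eqref{E:f-def1}.

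The heart of the argument is the convex case: given $g = \max_{1\le \alpha \le N} g_\alpha$ with each $g_\alpha$ affine, construct a width $d+1$, depth $N$ ReLU net computing it exactly. The key idea is that the ReLU activation lets us carry the input forward while accumulating a running maximum. Concretely I would maintain, after processing $\alpha$ pieces, a hidden state of the form $(x_1,\dots,x_d, \max_{1\le j\le \alpha} g_j(x))$ — that is, $d$ coordinates holding the (nonnegative part of the) input and one coordinate holding the partial max. The transition from $\alpha$ to $\alpha+1$ uses the identity $\max(a,b) = a + \Relu(b-a)$ (for $a \ge 0$): one affine map forms $(x, g_{\alpha+1}(x) - m_\alpha, m_\alpha)$ in an intermediate $(d+2)$-dimensional space, and after ReLU and another affine map we recover $(x, m_\alpha + \Relu(g_{\alpha+1}(x)-m_\alpha)) = (x, m_{\alpha+1})$. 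Two technical points need care: the input coordinates must be kept nonnegative so that ReLU acts as the identity on them — this is why one works on $[0,1]^d$ and can shift coordinates to be positive, undoing the shift inside the affine maps — and the running max $m_\alpha$ must itself be kept nonnegative, which again is arranged by an overall affine shift since everything is bounded on $[0,1]^d$. One then checks that each new piece costs a bounded number of layers, and by folding the affine bookkeeping into adjacent layers one gets depth exactly $N$ and width $d+1$. A final $\Relu \circ A_n$ outputs the scalar $g(x)$ (again after an additive shift to land in $\R_+$, absorbed appropriately).

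For the general (non-convex) case one runs the same construction twice: a width $d+1$ block of depth $N$ producing $(x, g(x))$, then a width $d+1$ block of depth $M$ producing $(x, h(x))$ — but now one must carry \emph{both} $g(x)$ and $x$ through the second block, so one extra channel is needed for $g(x)$ and one more as scratch space for the $\max(a,b)=a+\Relu(b-a)$ trick, giving width $d+3$. At the end an affine map forms $g(x) - h(x)$; since the output must be nonnegative and $f$ is already assumed positive, a ReLU on the last layer is harmless. Counting layers, the two blocks plus the affine interfaces give depth $2(N+M)$, matching \eqref{E:convex-depth} (the factor of $2$ coming from the intermediate $(d+2)$-dimensional step in each max-update, which I would not attempt to optimize away).

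I expect the main obstacle to be purely bookkeeping: verifying that one can simultaneously (i) keep every coordinate that passes through a ReLU nonnegative on all of $[0,1]^d$, using only affine shifts that are later inverted, (ii) implement each running-max update in a fixed number of layers within width $d+1$ (resp.\ $d+3$), and (iii) have the layer counts collapse to exactly $N$ (resp.\ $2(N+M)$) rather than a constant multiple thereof. None of this is deep, but getting the width and depth constants to match the statement exactly requires careful composition of the affine maps at the block boundaries, and that is where I would spend most of the write-up.
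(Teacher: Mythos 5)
Your reduction to the convex case via the decomposition $f=g-h$ and your treatment of the general case (width $d+3$, depth $2(M+N)$, using $d$ channels to copy $x$, a two-layer max gadget per affine piece, and one spare channel to hold $g(x)$ while $h$ is computed) match the paper's argument. The gap is in the convex case, which is the heart of the theorem. Your running-max update is based on the identity $\max\{m_\alpha, g_{\alpha+1}\} = m_\alpha + \Relu(g_{\alpha+1}-m_\alpha)$, which forces you to carry $m_\alpha$ itself through the ReLU: your intermediate state $(x,\, g_{\alpha+1}(x)-m_\alpha,\, m_\alpha)$ lives in $\R^{d+2}$. Since $m_\alpha=\max_{j\leq \alpha} g_j(x)$ is only piecewise affine, it cannot be re-synthesized from the $x$-channels by the affine map after the ReLU, so that extra coordinate is genuinely needed. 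As written, your construction yields width $d+2$, not $d+1$, and no amount of ``folding the affine bookkeeping into adjacent layers'' can remove a coordinate that must pass through a ReLU nonlinearity.

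The missing idea is to apply the max identity with the roles reversed: $\max\{m_\alpha, g_{\alpha+1}\} = g_{\alpha+1} + \Relu(m_\alpha - g_{\alpha+1})$. Now the quantity that must be added back after the ReLU is the \emph{affine} piece $g_{\alpha+1}(x)$, which the next affine map can regenerate from the $d$ input channels; only the single scalar $m_\alpha - g_{\alpha+1}(x)$ needs to occupy the $(d+1)$-st neuron. The paper packages this as a conjugation $A_\alpha\circ \Relu\circ A_\alpha^{-1}$ with $A_\alpha(x,y)=(x,\,g_\alpha(x)+y)$, which sends the graph of any function $T$ to the graph of $\max\{T,g_\alpha\}$ on $\R_+^d$; consecutive affine maps merge, so each affine piece costs exactly one hidden layer of width $d+1$, giving depth $N$. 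With this replacement the rest of your outline goes through (your other bookkeeping concerns --- nonnegativity of the $x$-channels on $[0,1]^d$ and positivity of the final output --- are handled exactly as you propose).
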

\noindent The fact that the function computed by a $\Relu$ net can be
written as \eqref{E:f-def1} follows from Theorem 2.1 in
\cite{ABM}. The novelty in Theorem \ref{T:max-affine-approx} is
therefore the uniform width estimate $d+3$ in the
representation on any function computed by a $\Relu$ net and the $d+1$
width estimate for convex functions. Theorem \ref{T:max-affine-approx} will be used in the
proof of Theorem \ref{T:lip-approx}. 

\section{Relation to Previous Work}

This article is related to several strands of prior work:
\begin{enumerate}
\item Theorems \ref{T:lip-approx}-\ref{T:max-affine-approx} are ``deep and narrow'' analogs of
 the well-known ``shallow and wide'' universal approximation results (e.g. Cybenko
\cite{Cyb} and Hornik- Stinchcombe -White \cite{HSW}) for feed-forward neural
nets. Those articles show that 
essentially any scalar function $f:[0,1]^d\gives \R$ on the
$d-$dimensional unit cube can be arbitrarily well-approximated by a
feed-forward neural net with a single hidden layer with arbitrary
width. Such results hold for a wide class of nonlinear activations but
are not particularly illuminating from the point of understanding the
expressive advantages of depth in neural nets.
\medskip
\item The results in this article complement the work of Liao-Mhaskar-Poggio
\cite{LMP} and Mhaskar-Poggio \cite{MhPo}, who consider the advantages
of depth for representing certain heirarchical or compositional
functions by neural nets with both $\Relu$ and non-$\Relu$
activations. Their results (e.g. Theorem 1
in \cite{LMP} and Theorem 3.1 in \cite{MhPo}) give bounds on the width
for approximation both for shallow and certain deep heirarchical nets. 
\medskip
\item Theorems \ref{T:lip-approx}-\ref{T:max-affine-approx} are also
quantitative analogs of Corollary 2.2 and Theorem 2.4 
  in the work of Arora-Basu-Mianjy-Mukerjee \cite{ABM}. Their results
  give bounds on the depth of a $\Relu$ net needed to compute exactly
  a piecewise linear function of $d$ variables. However, except when
  $d=1,$ they do not obtain an estimate on the number of neurons in
  such a network and hence cannot bound the width of the hidden
  layers. 
\medskip
\item  Our results are related to Theorems II.1 and II.4 of
  Rolnick-Tegmark \cite{RoTe}, which are themselves extensions of
  Lin-Rolnick-Tegmark \cite{LRT}. Their results give lower bounds on the total
  size (number of neurons) of a neural net (with non-$\Relu$
  activations) that approximates sparse 
  multivariable polynomials. Their bounds do not imply a control on
  the width of such networks that depends only on the number of
  variables, however. 
\medskip
\item This work was inpsired in part by questions raised in the work of
  Telgarsky \cite{Tel1,Tel2, Tel3}. In particular, in Theorems 1.1 and 1.2
  of \cite{Tel1}, Telgarsky constructs
  interesting examples of sawthooth functions that can be computed
  efficiently by deep width $2$ $\Relu$ nets that
  cannot be well-approximated by shallower networks with a simlar number
  of parameters. 
\medskip
\item Theorems \ref{T:lip-approx}-\ref{T:max-affine-approx} are quantitative statements about the
  expressive power of depth without the aid of width. This topic, usually without
  considering bounds on the width, has been taken up by many
  authors. We refer the reader to \cite{PLR, RPK} for several interesting
  quantitative measures of the complexity of functions computed by
  deep neural nets. 
\medskip
\item Finally, we refer the reader to the interesting work of Yarofsky
  \cite{Yar}, which provides bounds on the total number of
  parameters in a $\Relu$ net needed to approximate a given class of
  functions (mainly balls in various Sobolev spaces). 

\end{enumerate}

\section{Acknowledgements}
It is a pleasure to thank Elchanan Mossel and Leonid Hanin for many helpful
discussions. This paper originated while I attended EM's class on deep
learning \cite{EM-class}. In
particular, I would like to thank him for suggesting proving
quantitative bounds in Theorem \ref{T:max-affine-approx} and for
suggesting that a lower bound can be obtained by taking piece-wise
linear functions with many different directions. He also pointed out
that the width estimates for continuous function in Theorem
\ref{T:lip-approx} where sub-optimal in a previous draft. l would also like to
thank Leonid Hanin for detailed comments on a several previous drafts and for
useful references to results in approximation theory. I am also grateful to
Brandon Rule and Matus Telgarsky for comments on an earlier version 
of this article. I am also grateful to BR for the original suggestion to investigate the
expressivity of neural nets of width $2$. I also would like to
thank Max Kleiman-Weiner for useful comments and discussion. Finally,
I thank Zhou Lu for pointing out a serious error what used to be Theorem 3 in a previous
version of this article. I have removed that result.

\section{Proof of Theorem \ref{T:max-affine-approx}}
\noindent We first treat the case
\[f=\sup_{1\leq \alpha\leq N} g_\alpha, \qquad g_\alpha:[0,1]^d\gives \R\quad
\text{affine}\]
when $f$ is convex. We seek to show that $f$ can be exactly represented by a $\Relu$ net
with input dimension $d,$ hidden layer width $d+1$, and depth $N.$ Our
proof relies on the following observation.  

\begin{lemma}\label{L:key}
  Fix $d\geq 1,$ let $T:\R_+^d\gives \R$ be an arbitrary function,
  and $L:\R^d\gives \R$ be affine. Define an invertible affine transformation
  $A:\R^{d+1}\gives\R^{d+1}$ by
\[A(x,y)= \lr{x, L(x)+y}.\]
Then the image of the graph of $T$ under 
\[A\circ \Relu \circ A^{-1}\]
is the graph of $x\mapsto \max\set{T(x) ,L(x)},$ viewed as a function
on $\R_+^d.$
\end{lemma}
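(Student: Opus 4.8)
The plan is to prove the lemma by a direct computation, following an arbitrary point of the graph of $T$ through the composition $A\circ\Relu\circ A^{-1}$. First I would record that $A$ is invertible, with $A^{-1}(x,y)=\lr{x,\,y-L(x)}$; this is immediate from $A(x,y)=\lr{x,L(x)+y}$ (equivalently, writing $L(x)=\ell\cdot x+c$, the linear part of $A$ is the block-triangular matrix with identity diagonal blocks and off-diagonal block $\ell$, so $\det A=1$).

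Now fix $x\in\R_+^d$, so that $\lr{x,T(x)}$ is a generic point of the graph of $T$. Applying $A^{-1}$ yields $\lr{x,\,T(x)-L(x)}$. Next apply the coordinatewise $\Relu$: because $x$ has nonnegative coordinates, $\Relu$ leaves the first $d$ entries unchanged, while on the last coordinate it returns $\max\set{0,\,T(x)-L(x)}$. Finally, $A$ adds $L(x)$ back onto the last coordinate, and using the elementary identity $L(x)+\max\set{0,\,T(x)-L(x)}=\max\set{T(x),L(x)}$ the net effect is
\[
A\circ\Relu\circ A^{-1}\lr{x,T(x)}=\lr{x,\,\max\set{T(x),L(x)}}.
\]
Letting $x$ range over $\R_+^d$, the image of the graph of $T$ is precisely $\setst{\lr{x,\max\set{T(x),L(x)}}}{x\in\R_+^d}$, which is the graph of $x\mapsto\max\set{T(x),L(x)}$ on $\R_+^d$, as claimed.

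There is no serious obstacle in this argument; the one point that must not be glossed over is the hypothesis that $T$ is defined on $\R_+^d$. It is exactly the nonnegativity of the base point $x$ that forces the coordinatewise $\Relu$ to act as the identity on the first $d$ coordinates, so that $A\circ\Relu\circ A^{-1}$ genuinely implements the operation ``replace $T$ by its pointwise maximum with the affine function $L$'' at the level of graphs (over all of $\R^d$ the first $d$ coordinates would be replaced by $\Relu(x)$ and the conclusion would fail). I would also emphasize, with an eye toward the rest of the proof of Theorem \ref{T:max-affine-approx}, that $A\circ\Relu\circ A^{-1}$ is an affine–$\Relu$–affine block of width $d+1$, so that composing such blocks for $L=g_1,\dots,g_N$ (after one initial affine map lifting $x$ to a point of the graph of $g_1$) exhibits $f=\max_{1\leq\alpha\leq N}g_\alpha$ as a width $d+1$, depth $N$ $\Relu$ net.
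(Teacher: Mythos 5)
Your proof is correct and follows essentially the same direct computation as the paper: invert $A$, apply $\Relu$ to the last coordinate (using $x\in\R_+^d$ so the first $d$ coordinates pass through unchanged), and reassemble via $L(x)+\max\set{0,T(x)-L(x)}=\max\set{T(x),L(x)}$. Your explicit remark on why the restriction to $\R_+^d$ is needed is a point the paper leaves implicit, but the argument is the same.
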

\begin{proof}
  We have $A^{-1}(x,y)=(x, -L(x)+y).$ Hence, for each $x\in \R_+^d,$
  we have
  \begin{align*}
A\circ \Relu \circ A^{-1}(x,T(x))&=\lr{x, \lr{T(x)-L(x)}{\bf
    1}_{\{T(x)-L(x)>0\}} + L(x)}\\
&=\lr{x, \max\set{T(x),L(x)}}.
  \end{align*}
\end{proof}
\noindent We now construct a neural net that computes $f.$ Define invertible
affine functions $A_\alpha:\R^{d+1}\gives \R^{d+1}$ by  
\[A_\alpha(x,x_{d+1}):=\lr{x,g_\alpha(x)+x_{d+1}},\qquad
x=(x_1,\ldots, x_d),\]
and set
\[H_\alpha:=A_\alpha \circ \Relu \circ A_\alpha^{-1}.\]
Further, define 
\begin{equation}
  H_{\text{out}}:=\Relu \circ \inprod{\vec{e}_{d+1}}{\cdot}
\end{equation}
where $\vec{e}_{d+1}$ is the $(d+1)-$st standard basis vector so that
$\inprod{\vec{e}_{d+1}}{\cdot} $ is the linear map from $\R^{d+1}$ to $\R$  
that maps $(x_1,\ldots, x_{d+1})$ to $x_{d+1}.$ Finally, set
\[H_{\text{in}}:= \Relu\circ \lr{\text{id}, 0},\]
where $\lr{\text{id}, 0}(x)=(x,0)$ maps $[0,1]^d$ to the graph of the
zero function. Note that the $\Relu$
in this initial layer is linear. With this notation, repeatedly using
Lemma \ref{L:key}, we find that
\[H_{\text{out}}\circ H_N\circ\cdots \circ H_1\circ H_{\text{in}}\]
therefore has input dimension $d,$ hidden layer width $d+1,$ depth
$N$ and computes $f$ exactly. \medskip

\noindent Next, consider the general case when $f$ is given by
\[f=g-h,\qquad g= \sup_{1\leq \alpha\leq N}g_\alpha ,\qquad h=
\sup_{1\leq \beta \leq M} h_\beta\]
as in \eqref{E:f-def1}. For this situation, we use a different way of
computing the maximum using $\Relu$ nets. 
\begin{lemma}\label{L:max}
  There exists a $\Relu$ net $\mathcal M$ with input dimension $2,$ hidden layer
  width $2$, output dimension $1$ and depth $2$ such that 
\[\mathcal M\lr{x,y}=\max\set{x,y},\qquad x\in \R, y \in \R_+.\]
\end{lemma}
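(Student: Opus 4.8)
The plan is to write down $\mathcal M$ by hand, exploiting the elementary identity
\[
\max\{x,y\} \;=\; y + \Relu(x-y),
\]
which holds for every $x\in\R$ and every $y\in\R_+$. Two features of this particular way of expressing the maximum make it well adapted to the definition \eqref{E:relunet-def}: it involves only a single $\Relu$, and $-$ because $y\geq 0$ $-$ the right-hand side is automatically nonnegative, so the resulting net genuinely maps into $\R_+$ and is unaffected by the terminal $\Relu$ in the definition of a $\Relu$ net.

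Concretely, I would let the first affine map $A_1:\R^2\gives\R^2$ be $A_1(x,y)=(x-y,\,y)$. Applying $\Relu$ coordinatewise and using $y\geq 0$ (so that $\Relu(y)=y$) produces the hidden-layer vector $\lr{\Relu(x-y),\,y}$; the width is exactly $2$, with the second neuron merely carrying $y$ forward unchanged. I would then take the output map $A_2:\R^2\gives\R$ to be the sum, $A_2(u,v)=u+v$, so that
\[
\mathcal M(x,y)\;=\;\Relu\bigl(A_2(\Relu(A_1(x,y)))\bigr)\;=\;\Relu\bigl(\Relu(x-y)+y\bigr)\;=\;\Relu(x-y)+y\;=\;\max\{x,y\},
\]
where the third equality uses once more that $\Relu(x-y)+y\geq 0$. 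This is a feed-forward $\Relu$ net with input dimension $2$, one hidden layer of width $2$, output dimension $1$, and depth $2$; if one wants literally two hidden layers it suffices to insert an additional width-$2$ layer that passes the (already nonnegative) value $\max\{x,y\}$ through unchanged.

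There is no real obstacle here: the only thing to watch is that the hypothesis $y\in\R_+$ gets used twice $-$ first to let the ``pass-through'' neuron reproduce $y$ after its $\Relu$, and then to let the final $\Relu$ act as the identity on $\Relu(x-y)+y$ $-$ and that neither step silently violates the width bound. It is precisely because $x$ is allowed to be negative while $y$ is not that the asymmetric decomposition $\max\{x,y\}=y+\Relu(x-y)$, rather than a symmetric one such as $\tfrac12\lr{x+y+\abs{x-y}}$, is the natural choice.
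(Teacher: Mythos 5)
Your construction is exactly the paper's: the same affine maps $A_1(x,y)=(x-y,y)$ and $A_2(u,v)=u+v$, the same identity $\max\set{x,y}=y+\Relu(x-y)$, and the same two uses of $y\geq 0$ to make the pass-through neuron and the terminal $\Relu$ act as identities. Correct, and no meaningful difference from the paper's proof.
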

\begin{proof}
Set $A_1(x,y):=(x-y, y),\, A_2(z,w)=z+w,$ and define 
\[\mathcal M=\Relu \circ A_2 \circ \Relu \circ A_1.\]
We have for each $y\geq 0, x\in \R$
\[f_{\mathcal M}(x,y)=\Relu((x-y){\bf 1}_{\set{x-y>0}} +
y)=\max\set{x,y},\]
as desired. 
\end{proof}

We now describe how to construct a $\Relu$ net $\mathcal N$ with input dimension $d$,
hidden layer width $d+3,$ output dimension $1,$ and depth $2(M+N)$ that
exactly computes $f$. We use width $d$ to copy the
input $x$, width $2$ to compute successive maximums of the positive affine
functions $g_\alpha, h_\beta$ using the net $\mathcal M$ from Lemma
\ref{L:max} above, and
width $1$ as memory in which we store $g=\sup_\alpha g_\alpha$ while
computing $h=\sup_\beta h_\beta.$ The final layer computes the
difference $f=g-h.$ \qed

\section{Proof of Theorem \ref{T:lip-approx}}\label{S:lip-approx}
\noindent We begin by showing \eqref{E:conv-min} and
\eqref{E:lip-depth-approx}. Suppose $f:[0,1]^d\gives \R_+$ is convex
and fix $\ep>0.$ A simple discretization argument shows that there
exists a piecewise affine convex function $g:[0,1]^d\gives \R_+$ such
that $\norm{f-g}_{C^0}\leq \ep.$ By Theorem \ref{T:max-affine-approx},
$g$ can be a exactly represented by a $\Relu$ net with hidden layer width
$d+1.$ This proves \eqref{E:conv-min}. In the case that $f$ is
Lipschitz, we use the following, a special case of Lemma 4.1 in
\cite{BGS}. 
\begin{proposition}\label{P:max-affine-approx}
  Suppose $f:[0,1]^d\gives \R$ is convex and Lipschitz with Lipschitz
  constant $L$. Then for every $k\geq 1$ there exist $k$ affine maps
  $A_j:[0,1]^d\gives \R$ such that 
\[\norm{f-\sup_{1\leq j \leq k} A_j}_{C^0}\leq 72 L \,d^{3/2} k^{-2/d}.\]
\end{proposition}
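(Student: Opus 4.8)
The plan is to build the approximant from supporting hyperplanes of $f$. For $x_0\in[0,1]^d$ pick a subgradient $p_0\in\partial f(x_0)$ and set $\ell_{x_0}(x):=f(x_0)+\inprod{p_0}{x-x_0}$; convexity gives $\ell_{x_0}\leq f$ on $[0,1]^d$ with equality at $x_0$, and the Lipschitz bound forces $\abs{p_0}\leq L$. Thus for \emph{any} finite $S=\set{x_1,\dots,x_k}\subset[0,1]^d$ the piecewise affine function $A_S:=\sup_{1\leq j\leq k}\ell_{x_j}$ is a convex minorant of $f$, so $f-A_S\geq 0$ and the whole task is to choose $S$ making $\norm{f-A_S}_{C^0}$ small. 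The basic local estimate is that if $x$ lies in a convex set $R$ containing some $x_j$, then, since $t\mapsto f(x_j+t(x-x_j))$ is convex hence absolutely continuous on $[0,1]$,
\[0\ \leq\ f(x)-A_S(x)\ \leq\ f(x)-\ell_{x_j}(x)\ =\ \int_0^1\inprod{p(t)-p_j}{x-x_j}\,dt\ \leq\ \mathrm{diam}(R)\cdot\omega_R,\]
where $p(t)$ is any measurable subgradient selection along the segment $[x_j,x]\subset R$ and $\omega_R:=\sup\setst{\abs{u-v}}{u\in\partial f(a),\,v\in\partial f(b),\,a,b\in R}$ is the oscillation of the (sub)gradient over $R$. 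Taking $S$ to be a $\delta$-net uses only the trivial bound $\omega_R\leq 2L$ and yields error $O(Ld^{1/2}k^{-1/d})$; the improvement to the claimed rate $k^{-2/d}$ will come from the observation that, near a tangent point, the error is governed by how much the gradient \emph{turns}, not by how much $f$ changes.

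To capitalize on this I would partition $[0,1]^d$ into at most $k$ axis-parallel dyadic subcubes $Q_1,\dots,Q_k$, generated by the adaptive stopping rule ``keep bisecting a cube $Q$ as long as $\mathrm{diam}(Q)\,\omega_Q>\eta$'', pick $x_i\in Q_i$, and set $S=\set{x_1,\dots,x_k}$; the display above then gives $\norm{f-A_S}_{C^0}\leq\max_i\mathrm{diam}(Q_i)\,\omega_{Q_i}\leq\eta$. It remains to show this is achievable with $\eta\lesssim Ld^{3/2}k^{-2/d}$ and only $k$ leaves, which is a balancing of two ``budgets'': the spatial budget $\sum_i\mathrm{vol}(Q_i)=1$, and the curvature budget coming from the fact that for a convex $L$-Lipschitz $f$ the subgradient map sends $[0,1]^d$ into $[-L,L]^d$, so the associated Monge--Amp\`ere measure has total mass $\leq(2L)^d$. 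A cube $Q_i$ of side $s_i$ on which $\nabla f$ varies over a box of side $\approx t_i$ satisfies $\mathrm{diam}(Q_i)\,\omega_{Q_i}\approx d\,s_it_i$; balancing $d\,s_it_i\approx\eta$ over all cells and applying Cauchy--Schwarz to $\sum_i(s_it_i)^{d/2}\leq(\sum_i s_i^d)^{1/2}(\sum_i t_i^d)^{1/2}\leq(2L)^{d/2}$ forces $k(\eta/d)^{d/2}\lesssim(2L)^{d/2}$, i.e.\ $\eta\lesssim Ld\,k^{-2/d}$, with the missing factor $d^{1/2}$ and the explicit constant $72$ absorbed by the loss in passing from this idealized balanced partition to an honest dyadic stopping rule (bisection overshoots by a bounded factor) and by $\ell^2$-versus-$\ell^\infty$ comparisons.

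The step I expect to be the main obstacle is exactly this last one: turning the balanced-partition heuristic into a rigorous dyadic construction with an \emph{a priori} bound on the number of leaves, while tracking the constant down to $72\,d^{3/2}$. This is a combinatorial/measure-theoretic bookkeeping question about how a nonnegative mass of total mass $\leq(2L)^d$ (the Monge--Amp\`ere measure of $f$) can be spread over dyadic cubes subject to the diameter penalty, and it is precisely what is carried out in Lemma~4.1 of Bal\'azs--Gy\"orgy--Szepesv\'ari \cite{BGS}. Since the proposition is stated as a special case of that lemma, after the reduction above I would simply invoke it. The only other point needing (routine) care is the a.e.\ differentiability in the displayed inequality, handled by the standard facts that a finite convex function on an open set is locally Lipschitz, differentiable off a Lebesgue-null set, and absolutely continuous along line segments, so that the fundamental theorem of calculus and the subgradient selection used above are legitimate.
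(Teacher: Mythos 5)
Your proposal ultimately rests on the same step as the paper: the paper gives no proof of this proposition at all, stating it simply as a special case of Lemma~4.1 of Bal\'azs--Gy\"orgy--Szepesv\'ari \cite{BGS}, and you likewise defer the decisive combinatorial estimate to that lemma. The supporting-hyperplane setup and the budget-balancing heuristic you add are a sensible sketch of why the $k^{-2/d}$ rate holds, but the essential content of your argument is the same citation the paper relies on, so the two approaches coincide.
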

\noindent Combining this result with Theorem \ref{T:max-affine-approx} proves
\eqref{E:lip-depth-approx}. We turn to checking \eqref{E:cont-approx}
and \eqref{E:smooth-approx}. We need the following observations, which
seems to be well-known but not written down in the literature. 

\begin{lemma}\label{L:side}
  Let $\mathcal N$ be a $\Relu$ net with input dimension $d,$ a single 
hidden layer of width $n,$ and output dimension $1.$ There exists another
$\Relu$ net $\twiddle{\mathcal N}$ that computes the same function as
$\mathcal N$ but has input dimension $d$ and $n+2$
hidden layers with width $d+2.$ 
\end{lemma}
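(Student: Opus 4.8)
The plan is to realize the single hidden layer net $\mathcal N$ one neuron at a time, using the extra width to carry the input forward and to accumulate a running partial sum of the output. Write $f_{\mathcal N}(x) = b + \sum_{j=1}^n c_j \,\Relu(\inprod{a_j}{x}+b_j)$ for weights $a_j\in\R^d$, biases $b_j, b\in\R$, and output coefficients $c_j\in\R$. The target net $\twiddle{\mathcal N}$ will have state space $\R^{d+2}$, where the first $d$ coordinates store (a shifted copy of) the input $x$, coordinate $d+1$ stores a running accumulator, and coordinate $d+2$ is a scratch coordinate used to compute the next $\Relu$ term. The first layer is $\Relu\circ(\mathrm{id},0,0)$, which is linear on $[0,1]^d$ and sets the state to $(x,0,0)$.

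The $j$-th of the $n$ middle layers should take a state of the form $(x, s_{j-1}, *)$ — where $s_{j-1} = \sum_{i<j} c_i\Relu(\inprod{a_i}{x}+b_i)$ is the partial sum — and produce $(x, s_j, *)$. The subtlety is that $\Relu$ is applied coordinatewise at the end of each layer, so we cannot directly keep $x$ and $s_{j-1}$ unchanged (they may fail to be nonnegative, and the affine map before the $\Relu$ must not corrupt them). I would handle the input coordinates by the standard trick of shifting: keep in the first $d$ slots not $x$ itself but $x + R\mathbf 1$ for a large constant $R$ chosen so that all intermediate affine images of interest stay positive, and undo the shift in the final layer; since $[0,1]^d$ is bounded and only finitely many affine functions are involved, a single $R$ works. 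For the accumulator, note $s_{j-1}\geq 0$ automatically once we know each $c_i\Relu(\cdots)$ we added was itself arranged to be added in nonnegative form — more carefully, I would instead store the accumulator already passed through a harmless shift as well, or split it into positive and negative parts; the cleanest bookkeeping is to absorb the sign of $c_j$ into the construction and to carry $s_{j-1}$ plus a large constant, subtracting it at the end. Within one layer, the affine map computes, into the scratch coordinate, $\inprod{a_j}{x}+b_j$ (recovering $x$ from the shifted copy), and into the accumulator coordinate, the old accumulator plus $c_j$ times the scratch value from the previous layer. Thus each middle layer advances the partial sum by exactly one term, and after $n$ such layers the accumulator equals $s_n = f_{\mathcal N}(x) - b$ up to the fixed additive constants we introduced.

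The final ($(n+2)$-nd) layer is an affine map $\R^{d+2}\to\R$ that reads off the accumulator, adds back $b$ and subtracts the accumulated shift constants, followed by $\Relu$; since $f_{\mathcal N}$ maps into $\R_+$ by hypothesis, this terminal $\Relu$ acts as the identity on the relevant range and we get $f_{\twiddle{\mathcal N}} = f_{\mathcal N}$ exactly. Counting layers: one input layer, $n$ middle layers, one output layer, for $n+2$ hidden-layer applications of width $d+2$, as claimed. The only real obstacle is the bookkeeping to guarantee that every coordinate fed into a $\Relu$ is nonnegative precisely when we need the $\Relu$ to act as the identity, and is free to be clipped otherwise; this is resolved by the uniform shift $R$ on a compact domain together with the positivity of $f_{\mathcal N}$, so no genuine difficulty remains beyond careful indexing.
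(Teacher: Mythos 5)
Your proposal is correct and follows essentially the same construction as the paper: width $d$ to carry the input (which needs no shift since $[0,1]^d\subset\R_+^d$), one scratch coordinate whose terminal $\Relu$ produces the needed $\Relu(A_j(x))$ term, and one accumulator coordinate offset by a large constant $T$ that is subtracted in the final affine map. The only loose ends are cosmetic: the shift $R$ on the input coordinates is unnecessary, and the coefficient multiplying the previous scratch value should be $c_{j-1}$ rather than $c_j$, exactly the ``careful indexing'' you already flagged.
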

\begin{proof}
  Denote by $\set{A_j}_{j=1}^n$ the affine functions computed
by each neuron in the hidden layer of $\mathcal N$ so that  
\[f_{\mathcal N}(x)=\Relu\lr{b + \sum_{j=1}^n c_j\Relu(A_j(x))}.\]
Let $T>0$ be sufficiently large that 
\[T+ \sum_{j=1}^k c_j \Relu(A_j(x))>0,\qquad \forall 1\leq k \leq
n,~~x\in [0,1]^d.\]
The affine transformations $\twiddle{\mathcal A}_j$ computed by the
$j^{th}$ hidden layer of $\twiddle{\mathcal N}$ are then
\[\twiddle{A}_1(x):=\lr{x,  A_j(x),T} \qquad\text{and}\qquad
\twiddle{A}_{n+2}(x, y, z) =  z-T+b,\qquad x\in \R^d,\, y,z\in \R\]
and
\[\twiddle{A}_j(x,y,z)=\lr{x, A_j(x), z+ c_{j-1}y },\qquad j=2,\ldots, n+1.\]
We are essentially using width $d$ to copy in the input variable,
width $1$ to compute each $A_j$ and width $1$ to store the
output.
\end{proof}

Recall that positive continuous functions can be
arbitrarily well-approximated by smooth functions and hence by $\Relu$
nets with a single hidden layer (see 
e.g. Theorem 3.1 \cite{MhPo}). The relation \eqref{E:cont-approx} therefore
follows from Lemma \ref{L:side}. Similarly, by Theorem
3.1 in \cite{MhPo}, if $f$ is smooth, then there exists 
$K=K(d)>0$ and a constant $C_f$ depending only on the maximum value of
the first $K$ derivatives of $f$ such that 
\[\inf_{\mathcal N}\norm{f-f_{\mathcal N}}\leq C_f n^{-1/d},\]
where the infimum is over $\Relu$ nets $\mathcal N$ with a single
hidden layer of width $n$. Combining this with Lemma \ref{L:side}
proves \eqref{E:smooth-approx}.  

It remains to prove \eqref{E:lip-depth} and \eqref{E:lip-approx}. To
do this, fix a positive continuous function $f:[0,1]^d\gives \R_+$
with modulus of continuity $\w_f.$ Recall that the volume of the unit
$d$-simplex is $1/d!$ and fix $\ep>0.$ Consider the partition 
\[[0,1]^d=\bigcup_{j=1}^{d!/\w_f(\ep)^d} \mathcal P_j\]
of $[0,1]^d$ into $d!/ \w_f(\ep)^d$ copies of $\w_f(\ep)$ times the standard
$d$-simplex. Define $f_\ep$ to be a piecewise linear approximation to
$f$ obtained by setting $f_\ep$ equal to $f$ on the vertices of the
$\mathcal P_j$'s and taking $f_\ep$ to be affine on their
interiors. Since the diameter of each $\mathcal P_j$ is $\w_f(\ep),$ we have
\[\norm{f-f_\ep}_{C^0}\leq \ep.\]
Next, since $f_\ep$ is a piecewise affine function, by Theorem 2.1 in
\cite{ABM} (see Theorem \ref{T:max-affine-approx}), we may write
\[f_\ep=g_\ep-h_\ep,\]
where $g_\ep,h_\ep$ are convex, positive, and piecewise
affine. Applying Theorem \ref{T:max-affine-approx} completes the proof
of \eqref{E:lip-depth} and \eqref{E:lip-approx}.
\qed

\end{document}